\newtheorem{theorem}{Theorem}
\newtheorem{lemma}[theorem]{Lemma}
\newtheorem{proposition}[theorem]{Proposition}
\newtheorem*{remark}{Remark}
\title{Selective Conformal Risk Control}
\author{%
\textbf{Yunpeng Xu}$^{1}$ \and
\textbf{Wenge Guo}$^{2}$ \and
\textbf{Zhi Wei}$^{1}$\\[0.35em]
\small $^{1}$Department of Computer Science, New Jersey Institute of Technology\\
\small $^{2}$Department of Mathematical Sciences, New Jersey Institute of Technology
}
\begin{document}

\maketitle

\begin{abstract}
Reliable uncertainty quantification is essential for deploying machine learning systems in high-stakes domains.  
Conformal prediction provides distribution-free coverage guarantees but often yields overly large prediction sets, limiting its practical utility.  
To address this limitation, we propose \textit{Selective Conformal Risk Control} (SCRC), a unified framework that integrates conformal prediction with selective classification.  
SCRC operates in a selective setting: it abstains on low-confidence inputs and applies conformal risk control only to the accepted subset, where compact and informative prediction sets are most valuable.
The framework formulates uncertainty control as a two-stage procedure.  
The first stage selects confident samples, while the second stage constructs calibrated prediction sets via conformal risk control.  
We develop two variants: SCRC-T, a transductive method that preserves exchangeability and achieves exact finite-sample guarantees at the cost of per-test recomputation, and SCRC-I, an inductive alternative that reuses calibration thresholds to provide PAC-style guarantees with improved computational efficiency.
Experiments on two benchmark datasets demonstrate that both methods achieve the desired coverage and risk levels with nearly identical performance.  
SCRC-I is slightly more conservative but substantially more practical for deployment.  
Overall, SCRC improves the efficiency of uncertainty quantification on accepted samples while explicitly deferring uncertain cases for abstention or downstream handling.
\end{abstract}

\section{Introduction}
Despite growing research on uncertainty quantification, many real-world machine learning systems still provide predictions without transparent confidence assessments. In high-stakes domains such as medical diagnosis, autonomous driving, or financial decision-making, this absence of uncertainty information makes it difficult to assess how much one can trust a given prediction, potentially leading to critical failures. Therefore, quantifying and controlling predictive uncertainty has become a key requirement for building reliable machine learning systems.

To address this need, a wide range of methods have been developed to estimate and communicate predictive uncertainty, spanning both Bayesian approaches such as \cite{blundell2015weight, GalGhahramani2016Dropout}, and calibration-based techniques such as  \cite{guo2017calibration, kuleshov2018accurate, vaicenavicius2019evaluating}. 

As a popular uncertain quantization method, conformal prediction \cite{Angelopoulos2021, angelopoulos2022conformal, fisch2021conformal} received a lot of attention in recent years. The method produces a prediction set by post-processing the model output such that a bigger prediction size corresponds to a less confident inference and vice versa. It is distribution-free, with finite-sample and model-agnostic coverage guarantees. 

On the other hand, however, conformal prediction often suffers from inefficiency in practice: to meet the desired coverage level, the resulting prediction sets can be excessively large, sometimes including most or all possible labels. This greatly limits its usefulness in real-world applications, for example medical diagnosis, where compact and actionable prediction sets are critical. Reducing the prediction set size without compromising the risk guarantee is therefore a central challenge in improving the practicality of conformal prediction.

A related and complementary approach is selective classification \cite{chow1957, hellman1970, Geifman2017, Geifman2019}, which has been well studied in machine learning field for several decades. It introduces a reject option: the model abstains from making a prediction when it is uncertain. By adjusting the rejection threshold, the model can trade off coverage (fraction of predictions made) against accuracy (risk). 

Selective classification provides a natural mechanism to improve the efficiency of conformal prediction. Instead of producing large prediction sets for all samples and attempting to provide improved uncertainty quantification uniformly, the model can abstain on uncertain inputs and focus on the accepted subset. This allows for smaller average prediction set sizes on the selected (high-confidence) samples while maintaining overall risk control. Combining the coverage guarantee of conformal prediction with the flexibility of selective classification can thus yield more efficient and practically useful uncertainty quantification.

In this work, we propose a unified framework, Selective Conformal Risk Control (SCRC), that integrates conformal prediction with selective classification. The proposed method introduces a two-stage risk control procedure: the first stage determines which samples to accept for prediction (selection control), and the second stage constructs conformal prediction sets for the accepted samples (risk control). This combination allows the model to produce informative prediction sets only when confident and to abstain otherwise, leading to a more compact and interpretable uncertainty representation.  Accordingly, the guarantees in our framework are selective: they apply to the accepted subpopulation, while rejected cases are intentionally deferred rather than assigned a possibly uninformative prediction set.

We further develop two algorithmic variants to serve complementary roles: SCRC-T, a transductive method that computes thresholds symmetrically over calibration and test data, with strict exchangeability guarantees; SCRC-I, a more computationally efficient variant that reuses calibration thresholds across test samples, with PAC-style high-probability guarantees. Both methods ensure risk and coverage control, with theoretical guarantees derived under the conformal risk control framework. Empirical evaluations demonstrate that our proposed methods achieve the desired risk and coverage levels while significantly reducing prediction set sizes compared to standard conformal prediction.

To summarize, this paper makes the following contributions:
\begin{itemize}
    \item We propose Selective Conformal Risk Control, by combining conformal risk control and selective classification into a unified two-stage uncertainty quantification framework.
    \item We develop two practical algorithms: SCRC-T and SCRC-I, which ensure valid risk and coverage control under different computational trade-offs.
    \item We establish theoretical results for the proposed methods.
    \item We provide extensive experiments to validate the proposed methods.
\end{itemize}

\section{Problem Formulation}

Selective classification provides a mechanism for models to abstain from making uncertain predictions, allowing a trade-off between coverage and accuracy. 
Given an input space $\mathcal{X}$ and label space $\mathcal{Y} = \{1,\dots,K\}$, let $f:\mathcal{X} \rightarrow [0,1]^K$ denote a base classifier producing class scores for $K$ classes, and $g:\mathcal{X}\rightarrow [0,1]$ denote a selection function that measures confidence. 
For a selection threshold $\lambda\in[0,1]$, the model outputs
\begin{equation}
\label{eq1}
\hat{y}(x) = 
\begin{cases}
f(x), & \text{if } g(x) \ge 1 - \lambda,\\
\circledR, & \text{otherwise,}
\end{cases}
\end{equation}
where $\circledR$ indicates rejection.  
The overall prediction coverage and conditional classification risk are defined respectively as
\begin{align}
\phi(f,g) &= \mathbb{P}(g(X) \ge 1 - \lambda), \label{eq2}\\
R(f,g) &= \mathbb{E}[\,l(f(X),Y)\mid g(X)\ge 1 - \lambda\,], \label{eq3}
\end{align}
where $l$ is a bounded loss function measuring the classification error.  
A conventional selective classifier often seeks an optimal $g(x)$ that minimizes the conditional risk under a coverage constraint:
\begin{equation}
\label{eq4}
\min_g \; R(f,g) \quad \text{s.t. } \phi(f,g)\ge \xi.
\end{equation}

In this work, we are not concerned with learning $f$ or $g$ directly. 
Instead, we assume both are fixed and focus on calibrating their outputs to guarantee the desired risk and coverage levels under the conformal prediction framework. 
To that end, we introduce the \textit{Selective Conformal Risk Control} problem.

\paragraph{Selective Conformal Risk Control.}
Let $\mathbf{Z}=\{(x_i,y_i)\}_{i=1}^n$ be an exchangeable calibration dataset, where each sample is associated with model outputs $(f_i,g_i)$.  
For a new test instance $X_{n+1}$ and its corresponding score pair $(f_{n+1},g_{n+1})$, we determine its selective prediction $\mathcal{C}(X_{n+1}) \in \{\circledR\}\cup 2^{\mathcal{Y}}$, by post-processing the model score pair $(f_{n+1},g_{n+1})$ using calibration data, where  $\mathcal{C}$ is a function of the models as well as the calibration set.

Specifically, we introduce two calibration thresholds $\lambda=(\lambda_1,\lambda_2)$, and decide the output of $X_{n+1}$ as:
\begin{equation}
\mathcal{C}(X_{n+1}) =
\begin{cases}
\circledR, & g(X_{n+1})< 1-\lambda_1,\\
C_{\lambda_2}(X_{n+1}), & \text{otherwise.}
\end{cases}
\end{equation}
where $C_{\lambda_2}(X_{n+1})$ denotes a prediction set constructed according to threshold $\lambda_2$.  
In this work we use
\[
C_{\lambda_2}(X_{n+1}) = \{k\in\{1,\dots,K\}: f(X_{n+1})_k \ge 1-\lambda_2\},
\]
although other set-construction rules could also be employed.

The first-stage threshold $\lambda_1$ controls which examples are accepted for prediction, while the second-stage threshold $\lambda_2$ determines the size of the prediction set for the accepted cases.  
We evaluate selective performance using
\begin{align}
R(f,g) &= \mathbb{E}\big[l(\mathcal{C}_{\lambda_2}(X_{n+1}),Y_{n+1})\mid g(X_{n+1})\ge 1-\lambda_1\big], \\
\phi(f,g) &= \mathbb{P}(g(X_{n+1})\ge 1-\lambda_1),
\end{align}
where $l(\mathcal{C}_{\lambda_2}(X_{n+1}),Y_{n+1})\in[0,1]$ is a bounded, monotonically decreasing loss function that diminishes as the prediction set $\mathcal{C}_{\lambda_2}(X_{n+1})$ expands.

Since the expected set size conditional on acceptance is an indicator of prediction efficiency, our goal is therefore to find calibration parameters $(\lambda_1,\lambda_2)$ that satisfy the selective coverage $\xi$ and risk $\alpha$ requirements, while minimizing the expected prediction set size:
\begin{equation}
\label{eq5}
\begin{aligned}
\min_{(\lambda_1,\lambda_2)} \; & \mathbb{E}\big[|\mathcal{C}_{\lambda_2}(X_{n+1})| \mid g(X_{n+1})\ge 1-\lambda_1\big] \\
\text{s.t. } & R(f,g)\le \alpha, \quad \phi(f,g)\ge \xi.
\end{aligned}
\end{equation}

This defines the Selective Conformal Classification Problem.
The minimization of the conditional prediction set size prevents trivial solutions (e.g., always predict or include all labels), ensuring an efficient and informative prediction.

\section{Related Work}

\paragraph{Conformal Prediction.}
Conformal prediction (CP), originally developed by Vovk and colleagues \cite{Vovk1999,vovk2005}, is a foundational framework for distribution-free uncertainty quantification in machine learning.  
It provides finite-sample coverage guarantees under the assumption of data exchangeability, enabling rigorous uncertainty calibration without relying on parametric assumptions.  
A wide range of CP variants have been developed, including inductive and split conformal prediction \cite{papadopoulos2002inductive,lei20183distribution}, conformalized quantile regression \cite{romano2019conformalized}, and covariate-shift adaptation \cite{tibshirani2019covshift}, as well as limited false positives control \cite{fisch2021conformal}.  Comprehensive overviews of this field can be found in \cite{Angelopoulos2021,shafer2008tutorial}, which review both theoretical foundations and practical applications across regression, classification, and structured prediction tasks.

Recent research has reframed CP through the lens of \emph{risk control}\cite{angelopoulos2022conformal}: instead of targeting a coverage guarantee, conformal risk control (CRC) directly constrains the expected loss of the prediction at a target level $\alpha$. This generalization extends the scope of the classical CP and enables it for many new applications. Our work is conceptually aligned with this framework. 

\paragraph{Selective Classification.} Selective classification (also known as classification with a reject option) has received extensive studies in the past few decades. 
The foundational studies by \cite{chow1957, hellman1970} established optimal decision rules under the reject option.  
Since then, numerous selective classification methods have been introduced, including \cite{bartlett2008reject, Geifman2017, Geifman2019, franc2023optimalreject, pugnana2023aucselect}. 
These methods provide a principled trade-off between accuracy and coverage, by allowing models to abstain from making a prediction when it is uncertain.
A comprehensive survey of machine learning approaches with rejection is provided by \cite{hendrickx2024mlrejectsurvey}.

\paragraph{Conformal Prediction and Selective Classification.} A small but growing body of research integrates conformal prediction or calibration with selective classification principles.
\cite{fisch2022} developed a calibrated selective classification framework that trains a selection model to ensure that accepted predictions remain probability-calibrated. \cite{bao2024scifcsr} introduced selective conditional conformal prediction with false coverage rate (FCR) control.   \cite{gazin2024informative} further developed two informative selective CP procedures that guarantee FCR while constraining informativeness. 
In contrast, our method addresses a distinct but complementary problem: we formulate a two-stage selective conformal risk-control framework that simultaneously enforces coverage and conditional risk guarantees on accepted samples, while minimizing the expected prediction set size to improve efficiency.

\section{Method}
The problem naturally involves two stages, selection and classification, each targeting a distinct risk control objective. This formulation is conceptually related to the two-stage risk control framework introduced in \cite{yunpengxu2025twostages}. However, the selective setting presents a critical challenge: the act of selection disrupts exchangeability between calibration and test samples, a key condition for applying the conformal risk control (CRC) framework. As a result, the existing two-stage method cannot be directly applied without modification. We therefore develop a new approach tailored to selective classification.

\subsection{Conditional Exchangeability}
To invoke standard results from the CRC framework, we first need to address the question of exchangeability after selection.

\begin{lemma}
\label{lemma1}
    Suppose $\left(X_1, Y_1\right), \ldots, \left(X_{n+1}, Y_{n+1}\right)$ are exchangeable, and let $\mathcal{I}$ be a symmetric selection rule, meaning that for any dataset $\mathcal{D}$, any permutation $\sigma$ of $[n+1]$, and any index $i \in [n+1]$,
    $$
    \sigma(i) \in \mathcal{I}(\mathcal{D}) \Longleftrightarrow i \in \mathcal{I}\left(\mathcal{D}^\sigma\right),
    $$
    where $\mathcal{D}^\sigma = \{(X_{\sigma(i)}, Y_{\sigma(i)})\}_{i=1}^{n+1}$ denotes the permutated dataset. 
    Let $\mathcal{E}_I$ denote the event that $\mathcal{I}\left(\mathcal{D}\right)=I$, for some fixed nonempty subset $I \subseteq[n+$ $1]$, and assume $\mathbb{P}\left(\mathcal{E}_I\right)>0$. Then, conditional on $\mathcal{E}_I$, the subcollection $\left\{\left(X_i, Y_i\right)\right\}_{i \in I}$ is exchangeable. 
\end{lemma}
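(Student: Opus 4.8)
The plan is to deduce the conditional exchangeability on $\mathcal{E}_I$ from the unconditional exchangeability of the full sample, using the symmetry of $\mathcal{I}$ to show that conditioning on $\mathcal{E}_I$ does not break the symmetry among the coordinates indexed by $I$. Recall that ``$\{(X_i,Y_i)\}_{i\in I}$ is exchangeable given $\mathcal{E}_I$'' is equivalent to the assertion that, for every permutation $\tau$ of the index set $I$ and every measurable set $A$ in the corresponding product space,
\[
\mathbb{P}\big((X_i,Y_i)_{i\in I}\in A \,\big|\, \mathcal{E}_I\big) \;=\; \mathbb{P}\big((X_{\tau(i)},Y_{\tau(i)})_{i\in I}\in A \,\big|\, \mathcal{E}_I\big).
\]
So I would fix such a $\tau$ and $A$ and verify this identity.

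First I would extend $\tau$ to a permutation $\sigma$ of $[n+1]$ by letting $\sigma$ agree with $\tau$ on $I$ and act as the identity on $[n+1]\setminus I$. Since $\tau$ maps $I$ onto itself, $\sigma(I)=I$. Applying the symmetry hypothesis gives $\mathcal{I}(\mathcal{D}^\sigma)=\sigma^{-1}\big(\mathcal{I}(\mathcal{D})\big)$, and because $\sigma^{-1}(I)=I$ this yields $\mathcal{I}(\mathcal{D})=I \Longleftrightarrow \mathcal{I}(\mathcal{D}^\sigma)=I$; in other words, the event $\mathcal{E}_I$ is invariant under the relabelling $\mathcal{D}\mapsto\mathcal{D}^\sigma$. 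Next I would introduce the statistic $h(\mathcal{D})=\mathbf{1}\{(X_i,Y_i)_{i\in I}\in A\}\cdot\mathbf{1}\{\mathcal{I}(\mathcal{D})=I\}$, a measurable function of the ordered sample. Evaluating $h$ at $\mathcal{D}^\sigma$, and using $\sigma|_I=\tau$ together with the invariance of $\mathcal{E}_I$, gives $h(\mathcal{D}^\sigma)=\mathbf{1}\{(X_{\tau(i)},Y_{\tau(i)})_{i\in I}\in A\}\cdot\mathbf{1}\{\mathcal{I}(\mathcal{D})=I\}$. Since exchangeability of $(X_1,Y_1),\dots,(X_{n+1},Y_{n+1})$ means $\mathcal{D}\stackrel{d}{=}\mathcal{D}^\sigma$, we get $\mathbb{E}[h(\mathcal{D})]=\mathbb{E}[h(\mathcal{D}^\sigma)]$, that is, $\mathbb{P}\big((X_i,Y_i)_{i\in I}\in A,\ \mathcal{E}_I\big)=\mathbb{P}\big((X_{\tau(i)},Y_{\tau(i)})_{i\in I}\in A,\ \mathcal{E}_I\big)$. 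Dividing both sides by $\mathbb{P}(\mathcal{E}_I)>0$ produces the displayed conditional identity, and since $\tau$ and $A$ were arbitrary, the conclusion follows.

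The part that needs the most care is the relabelling bookkeeping: one must check that $\mathcal{E}_I$ is genuinely $\sigma$-invariant — this is exactly where $\sigma(I)=I$ is used, and it is the reason $\tau$ must be extended by the identity rather than by an arbitrary permutation of the complement — and one must be comfortable invoking the unconditional exchangeability of $\mathcal{D}$ on a statistic $h$ that depends not only on the data values but also on the (deterministic) output of the selection rule $\mathcal{I}$. Beyond that, the argument is simply a change of variables in the joint law, with no measure-theoretic or probabilistic machinery required past the equivalence between exchangeability and the family of permutation-indexed joint-probability identities.
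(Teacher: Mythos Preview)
Your proposal is correct and follows essentially the same route as the paper: extend the permutation of $I$ by the identity on the complement, use the symmetry of $\mathcal{I}$ together with $\sigma(I)=I$ to see that $\mathcal{E}_I$ is invariant under the relabelling, and then apply unconditional exchangeability to the indicator product before dividing by $\mathbb{P}(\mathcal{E}_I)$. Your bookkeeping is in fact slightly cleaner---you correctly derive $\mathcal{I}(\mathcal{D}^\sigma)=\sigma^{-1}(\mathcal{I}(\mathcal{D}))$ from the stated equivalence, whereas the paper writes $\sigma(\mathcal{I}(\mathcal{D}))$, a harmless slip here since $\sigma(I)=\sigma^{-1}(I)=I$.
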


\begin{proof}
Let $A \subset (\mathcal{X} \times \mathcal{Y})^{|I|}$ be a measurable set of values for the data pairs. Let $\pi$ be any permutation of the index set $I$, and extend it to a permutation $\sigma$ on the full set of indices $[n+1]$ as follows, 
\[
\sigma(i)=
\begin{cases}
\pi(i) & \text{for $i \in I$} \\
i & \text{for $i \notin I$}
\end{cases}
\]
Since $I$ is a set, we have $\sigma(I)=I$. We aim to show that
\[
P((X_i, Y_i)_{i\in I} \in A | \mathcal{E}_I) = P((X_{\pi(i)}, Y_{\pi(i)})_{i\in I} \in A | \mathcal{E}_I).
\]
Start from the numerator of the conditional probability, and write it as an expectation of an indicator:
\[
P((X_i, Y_i)_{i\in I} \in A, \mathcal{E}_I) = E[\mathbf{1}_{\{(X_i, Y_i)_{i\in I} \in A\}}\mathbf{1}_{\mathcal{E}_I}].
\]
By exchangeability of the full dataset, we have 
\[
E[\mathbf{1}_{\{(X_i, Y_i)_{i\in I} \in A\}}\mathbf{1}_{\mathcal{E}_I}] = E[\mathbf{1}_{\{(X_{\sigma(i)}, Y_{\sigma(i)})_{i\in I} \in A\}}\mathbf{1}_{\mathcal{E}_{I_\sigma}}],
\]
where $\mathcal{E}_{I_\sigma}$ is the event that $\mathcal{I}(D^\sigma) =I$. By symmetry of the selection rule $\mathcal{I}$, we have $\mathcal{I}(D^\sigma)= \sigma(\mathcal{I}(D))$. Since $\sigma(I)=I$, it follows that $\mathbf{1}(\mathcal{E}_{I_\sigma}) = \mathbf{1}(\mathcal{E}_I)$, therefore,
\[
P((X_i, Y_i)_{i\in I} \in A,{\mathcal{E}_I}) = P((X_{\sigma(i)}, Y_{\sigma(i)})_{i\in I} \in A,{\mathcal{E}_I}) = P((X_{\pi(i)}, Y_{\pi(i)})_{i\in I} \in A,{\mathcal{E}_I}).
\]
\end{proof}

\subsection{First Stage Control}
Lemma \ref{lemma1} says that to preserve exchangeability, the selection rule employed in the first stage must be symmetric. This requirement motivates a modification to the standard first-stage risk control procedure.

In the first stage, the loss function depends solely on the feature $X$, rather than on the full data pair $(X, Y)$. The threshold $\lambda_1$ is determined by computing the empirical quantile of the feature-dependent loss values over both calibration and test samples, yielding the estimator $\hat{\lambda}_1$. Because this construction treats all features symmetrically, $\hat{\lambda}_1$ is a symmetric function of $(X_1, \ldots, X_{n+1})$. This design contrasts with the standard CRC method, where the threshold is estimated using only the calibration data and is independent of the test data, thus breaking symmetry and invalidating exchangeability in the selective setting. 

Formally, define the first-stage loss function as
\begin{equation}
\label{eq_6}
L^{(1)}(X;\lambda_1)=\mathbf{1}\{g(X)<1-\lambda_1\}.
\end{equation}
Given calibration features $(X_1,\dots,X_n)$ and test feature $X_{n+1}$, the empirical first stage risk is
\begin{equation}
\label{eq_7}
\widehat{R}_{n+1}^{(1)}(\lambda_1)=\frac{1}{n+1} \sum_{i=1}^{n+1} L^{(1)}(X_i, \lambda_1).
\end{equation}
The data-driven threshold is then defined by
    \begin{equation}
    \label{eq_8}
    \hat{\lambda}_1=\inf \left\{\lambda_1 \in \Lambda_1: \widehat{R}_{n+1}^{(1)}(\lambda_1) \leq 1 - \xi\right\} .
    \end{equation}

By this construction, $\hat{\lambda}_1$ is a symmetric function of the feature values $(X_1, \ldots, X_{n+1})$. Therefore, under the exchangeability assumption on $(X_1, \ldots, X_{n+1})$ and by Lemma \ref{lemma1}, the selection event defined through $\hat \lambda_1$ preserves exchangeability, ensuring that $\mathbb{P}(g(X_{n+1})\ge 1-\hat\lambda_1)\ge \xi$. This establishes valid first-stage risk control while maintaining the symmetry required for the second-stage analysis.

\subsection{Second Stage Control}
Once $\hat{\lambda}_1$ is determined, the selection rule $\mathcal{I}$ is defined as
\begin{equation}
\label{eq_9}
\mathcal{I}\left(X_1, \ldots, X_{n+1}\right)=\left\{i \in[n+1]: g\left(X_i\right) \geq 1-\hat{\lambda}_1\right\} .
\end{equation}
This rule is symmetric with respect to the inputs $(X_1, \ldots, X_{n+1})$, since it depends only on the symmetric threshold $\hat{\lambda}_1$ and applies the same elementwise comparison to each feature. 

By Lemma \ref{lemma1}, let $\mathcal{E}_I$ denote the event that $\mathcal{I}(X_1, \ldots, X_{n+1}) = I$ for some fixed nonempty subset $I \subseteq [n+1]$ with $n+1 \in I$, and assume $\mathbb{P}(\mathcal{E}_I) > 0$. Then, conditional on $\mathcal{E}_I$, the subcollection $\left\{(X_i, Y_i)\right\}_{i \in I}$ is exchangeable. 
Therefore, the standard CRC procedure can be applied to this selected subset to estimate the second-stage threshold $\hat{\lambda}_2$, ensuring conditional risk control at level $\alpha$.
    
Formally, for a fixed candidate value $\bar\lambda_1\in[0,1]$, define the selected calibration subset
\begin{equation}
\label{eq_9_1}
Z_{\bar\lambda_1}=\{(X_i,Y_i): g(X_i)\ge 1-\bar\lambda_1,\; i\in[n]\}.
\end{equation}
We denote its cardinality by $\quad m=|Z_{\bar\lambda_1}|$.
In practice, the search over $\bar\lambda_1$ is restricted to the interval $[\hat\lambda_1,1]$ to reduce computation.

Define the second-stage loss as
\begin{equation}
\label{eq_9_2}
L^{(2)}(X,Y;\lambda_2)=l(C_{\lambda_2}(X),Y),
\end{equation}
which is assumed to be bounded in $[0,1]$ and non-increasing in $\lambda_2$. 

The conformal risk control (CRC) rule chooses the largest feasible threshold
\begin{equation}
\label{eq_11}
    \hat \lambda_2 = inf \{\lambda_2 \in [0, 1]: \sum_{(X_i, Y_i) \in Z_{\bar \lambda_1}} l(\mathcal{C}_{\lambda_2}(X_i), Y_i) \le  \lceil (m+1) \alpha \rceil  -1 \}.
\end{equation}
This construction guarantees that
\begin{equation}
\mathbb{E}[l(C_{\hat\lambda_2}(X),Y) \mid g(X)\ge 1-\bar\lambda_1]\le \alpha.
\end{equation}

\begin{theorem}[Selective CRC Guarantee]
\label{thm:selective_crc}
{Assume $(X_i,Y_i)_{i=1}^{n+1}$ are exchangeable. Let $\hat\lambda_1$ be obtained by Equation~(\ref{eq_8}), let $\bar\lambda_1$ be a \emph{symmetric} function of the $n{+}1$ points such that $\bar\lambda_1\ge \hat\lambda_1$ almost surely, and let $\hat\lambda_2$ be obtained by Equation~(\ref{eq_11}). Then the resulting selective classifier satisfies}
\[
{\mathbb{E}[l(C_{\hat\lambda_2}(X_{n+1}),Y_{n+1})\mid g(X_{n+1})\ge 1-\bar\lambda_1] \le \alpha,}
\]
{and the selection coverage satisfies}
\[
{\mathbb{P}\!\left(g(X_{n+1})\ge 1-\bar\lambda_1\right)\ \ge\ \xi.}
\]
\end{theorem}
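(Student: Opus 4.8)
The plan is to prove the coverage bound first (it is the short part) and then the conditional-risk bound, in both cases by reducing to the standard conformal risk control (CRC) argument through Lemma~\ref{lemma1}. For coverage, note that $\hat\lambda_1$ from~(\ref{eq_8}) is by construction a symmetric function of $(X_1,\dots,X_{n+1})$, and that $\lambda_1\mapsto\widehat R^{(1)}_{n+1}(\lambda_1)$ is non-increasing and right-continuous (each summand $\mathbf 1\{g(X_i)<1-\lambda_1\}$ is), so the infimum in~(\ref{eq_8}) is attained and $\widehat R^{(1)}_{n+1}(\hat\lambda_1)\le 1-\xi$. I would then take expectations: by exchangeability of $(X_1,\dots,X_{n+1})$ and symmetry of $\hat\lambda_1$, the variables $L^{(1)}(X_i;\hat\lambda_1)$ are identically distributed over $i\in[n+1]$, hence $\mathbb P(g(X_{n+1})<1-\hat\lambda_1)=\mathbb E[L^{(1)}(X_{n+1};\hat\lambda_1)]=\mathbb E[\widehat R^{(1)}_{n+1}(\hat\lambda_1)]\le 1-\xi$, i.e.\ $\mathbb P(g(X_{n+1})\ge 1-\hat\lambda_1)\ge\xi$. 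Since $\bar\lambda_1\ge\hat\lambda_1$ forces $\{g(X_{n+1})\ge 1-\hat\lambda_1\}\subseteq\{g(X_{n+1})\ge 1-\bar\lambda_1\}$, the coverage claim for $\bar\lambda_1$ follows at once.

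For the conditional-risk bound, the difficulty is that the selection event $\{g(X_{n+1})\ge 1-\bar\lambda_1\}$, the selected calibration set $Z_{\bar\lambda_1}$, and hence $\hat\lambda_2$ are all coupled with the test point, so plain exchangeability of calibration and test data is unavailable. I would restore it by conditioning on the realized selection pattern: fix a nonempty $I\subseteq[n+1]$ with $n+1\in I$ and $\mathbb P(\mathcal E_I)>0$, where $\mathcal E_I$ is the event that the selection rule returning $\{i\in[n+1]:g(X_i)\ge 1-\bar\lambda_1\}$ equals $I$. This rule is symmetric in the sense required by Lemma~\ref{lemma1} precisely because $\bar\lambda_1$ is deterministic or a symmetric function of the $n+1$ points: one checks $\sigma(i)\in\mathcal I(\mathcal D)\iff i\in\mathcal I(\mathcal D^\sigma)$ directly, using $\bar\lambda_1(\mathcal D^\sigma)=\bar\lambda_1(\mathcal D)$. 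Lemma~\ref{lemma1} then yields that, conditional on $\mathcal E_I$, the subcollection $\{(X_i,Y_i)\}_{i\in I}$ is exchangeable.

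On $\mathcal E_I$ we have $Z_{\bar\lambda_1}=\{(X_i,Y_i):i\in I\setminus\{n+1\}\}$, so $m=|I|-1$, and --- the key observation --- Equation~(\ref{eq_11}) depends on $\bar\lambda_1$ only through $Z_{\bar\lambda_1}$; therefore, conditional on $\mathcal E_I$, the threshold $\hat\lambda_2$ is exactly the CRC threshold built from the $m$ selected calibration pairs and involves the test point in no other way. This is the configuration in which the conformal risk control theorem of~\cite{angelopoulos2022conformal}, applied to the exchangeable collection $\{(X_i,Y_i)\}_{i\in I}$, delivers $\mathbb E[l(C_{\hat\lambda_2}(X_{n+1}),Y_{n+1})\mid\mathcal E_I]\le\alpha$. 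I would either cite that theorem or reproduce its short proof: bound $\hat\lambda_2$ below by an auxiliary threshold that also uses the test loss $l(C_\lambda(X_{n+1}),Y_{n+1})$, use monotonicity of $\lambda\mapsto l(C_\lambda(X_{n+1}),Y_{n+1})$ to pass from the auxiliary threshold to $\hat\lambda_2$, then use conditional exchangeability and the symmetry of the auxiliary threshold to rewrite $\mathbb E[\cdot\mid\mathcal E_I]$ as an average over $i\in I$ that is at most $\alpha$; along the way one must verify the usual regularity of $l$ and of $\lambda\mapsto C_\lambda(\cdot)$ (boundedness in $[0,1]$, monotonicity, right-continuity, the boundary value $l(\mathcal Y,y)\le\alpha$ that also makes the infimum in~(\ref{eq_11}) well defined, and the relation between the $\lceil(m+1)\alpha\rceil-1$ convention there and the form needed for general $[0,1]$-valued losses). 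Finally, since $\{g(X_{n+1})\ge 1-\bar\lambda_1\}$ is the disjoint union of the events $\mathcal E_I$ over $I\ni n+1$, I would average the per-$I$ bounds with weights $\mathbb P(\mathcal E_I\mid g(X_{n+1})\ge 1-\bar\lambda_1)$ to obtain the stated bound.

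I expect the main obstacle to be conceptual rather than computational: it is the loss of exchangeability caused by selection, and the whole argument rests on Lemma~\ref{lemma1}, whose point is that one must condition on the \emph{precise} realized selection set $I$ (not merely on the event that $n+1$ is selected) for the selected calibration and test pairs to become exchangeable; once that is done, the off-the-shelf CRC machinery runs verbatim. The remaining work is bookkeeping that makes the reduction airtight --- verifying symmetry of the selection rule under the stated hypotheses on $\bar\lambda_1$, checking that conditional on $\mathcal E_I$ the quantity $\hat\lambda_2$ is measurable with respect to and symmetric in only the selected calibration pairs, and confirming the CRC regularity conditions --- and I do not expect any of it to be genuinely hard.
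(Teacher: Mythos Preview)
Your proposal is correct and follows essentially the same approach as the paper: coverage via symmetry of $\hat\lambda_1$, exchangeability, and monotonicity in $\lambda_1$; risk via conditioning on the realized selection set $\mathcal E_I$, invoking Lemma~\ref{lemma1} for conditional exchangeability, and then applying the standard CRC bound. Your write-up is in fact more careful than the paper's --- you explicitly check attainment of the infimum, verify the symmetry hypothesis of Lemma~\ref{lemma1} for $\bar\lambda_1$, and perform the final averaging over $I\ni n{+}1$ to pass from the per-$I$ bound to the stated conditional expectation, steps the paper leaves implicit.
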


\begin{proof}
{(Coverage) Because $\hat\lambda_1$ is a symmetric functional of $(X_1,\dots,X_{n+1})$, hence by exchangeability}
\[
{\mathbb{E}[\mathbf{1}\{g(X_{n+1})<1-\hat\lambda_1\}]=\mathbb{E}[\hat R^{(1)}_{n+1}(\hat\lambda_1)]\le 1-\xi.}
\]
{Since $\bar \lambda_1 \ge \hat \lambda_1$ almost surely, monotonicity in $\lambda_1$ yields}
\[
{\mathbb{P}(g(X_{n+1})\ge 1-\bar\lambda_1)\ge \mathbb{P}(g(X_{n+1})\ge 1-\hat\lambda_1)\ge \xi.}
\]

{(Risk) Let $I=\{i:g(X_i)\ge 1-\bar\lambda_1\}$ and $m=|I\cap[n]|$. Since $\bar\lambda_1$ is symmetric, conditioning on $E_I=\{I(D_{n+1})=I,\ n{+}1\in I\}$, Lemma~\ref{lemma1} ensures that $\{(X_i,Y_i)\}_{i\in I}$ are exchangeable. Applying the CRC counting rule to the bounded, non-increasing loss $l(C_{\lambda_2},\cdot)$ and Equation~\eqref{eq_11} yields}
\[
{\mathbb{E}[l(C_{\hat\lambda_2}(X_{n+1}),Y_{n+1})\mid g(X_{n+1})\ge 1-\bar\lambda_1]\le \alpha.}
\]
\end{proof}

\begin{remark}[Feasibility check]
To ensure a valid solution of Equation (\ref{eq_11}), we requier $\lceil (m+1) \alpha \rceil  -1 > 0$, which holds whenever $m\ge m_{min} = \lceil 1/\alpha\rceil-1$. In practice, candidates with $m < m_{min}$ are skipped, or $\bar \lambda_1$ is increased, to ensure feasible risk control at level $\alpha$ in the second stage. 
\end{remark}

\subsection{Set-Size Refinement}
{Theorem~\ref{thm:selective_crc} establishes the selective validity of the base construction for a fixed admissible first-stage threshold. In practice, one may further reduce the prediction set size by searching over candidate values of $\lambda_1$ whenever a family of feasible candidates are available. 

For a fixed $\lambda_1 \in [\hat \lambda_1, 1]$, let $\lambda_2(\lambda_1)$ be the maximal value of $\lambda_2$ that satisfies Equation~\ref{eq_11}. By the monotonicity of the loss $L^{(2)}(X,Y;\lambda_2)$ in $\lambda_2$, the corresponding prediction set size $|C_{\lambda_2(\lambda_1)}(X)|$ is the smallest among all feasible $\lambda_2 \in [\lambda_2(\lambda_1), 1]$.

The first-stage threshold $\hat \lambda_1$ is the minimal threshold that guarantees the target selection coverage, but it need not minimize the conditional prediction set size. Moreover, the second-stage risk need not be monotone in $\lambda_1$ for fixed $\lambda_2$, so restricting attention to $\hat\lambda_1$ alone may be suboptimal from an efficiency standpoint. This motivates a practical search over a finite grid of candidate first-stage thresholds.

Accordingly, we consider the following heuristic: for each candidate $\lambda_1$ in a user-specified grid, construct the corresponding selected calibration subset, compute the feasible second-stage threshold, and then choose the candidate that yields the smallest empirical prediction set size. The resulting procedure is summarized in Algorithm~\ref{alg_1}, and its \emph{efficiency} aspect is formalized in Theorem~\ref{thm:erm-near-opt}. We emphasize that this procedure is intended as a practical refinement to improve set size; moreover, its output is not covered by Theorem~\ref{thm:selective_crc}.

\begin{algorithm}[tb]
\caption{Heuristic Search for Efficient SCRC Thresholds}
\label{alg_1}
\textbf{Input}: Calibration data $\{(X_i, Y_i)\}_{i = 1}^n$, test feature $X_{n+1}$, coverage level $\xi$, classification risk level $\alpha$, candidate grid $\Lambda \subseteq [0,1]$\\
\textbf{Output}: A practically chosen pair $(\bar{\lambda}_1, \bar{\lambda}_2)$
\begin{algorithmic}[1]
\State Compute the theorem-backed threshold $\hat{\lambda}_1$ using Equation~\ref{eq_8}.
\State Initialize $(\bar \lambda_1, \bar \lambda_2) \gets (\hat\lambda_1, +\infty)$.
\For{each $\lambda_1 \in \Lambda$ such that $\lambda_1 \ge \hat\lambda_1$}
        \State Construct the selected calibration subset $Z_{\lambda_1}$ using Equation~\ref{eq_9_1}.
        \State $m \gets |Z_{\lambda_1}|$.
        \If{$m > m_{\min}$}
            \State Compute the feasible second-stage threshold $\hat \lambda_2(\lambda_1)$ using Equation~\ref{eq_11}.
            \State Evaluate the empirical conditional prediction set size $\widehat S(\lambda_1)$.
            \If{$\widehat S(\lambda_1) < \widehat S(\bar\lambda_1)$}
                \State $(\bar \lambda_1, \bar \lambda_2) \gets (\lambda_1, \hat \lambda_2(\lambda_1))$.
            \EndIf
        \EndIf
\EndFor
\State \Return $(\bar{\lambda}_1, \bar{\lambda}_2)$
\end{algorithmic}
\end{algorithm}


\begin{theorem}[Finite-Grid Selection Efficiency]\label{thm:erm-near-opt}
Assume the calibration sample $(X_i,Y_i)_{i=1}^n$ is i.i.d.
Let $\Lambda\subset[0,1]$ be a finite, data-independent grid and define
$\hat\lambda_1^{\mathrm{ERM}}\in{\arg\min}_{\lambda_1\in\Lambda, \lambda_1\ge\hat\lambda_1}\widehat S(\lambda_1)$, where $\widehat S(\lambda_1)$ denotes the empirical mean prediction set size for parameter $\lambda_1$.
If the prediction set sizes are bounded by $B$ and $m_{\min}=\min_{\lambda_1\in\Lambda, \lambda_1\ge\hat\lambda_1} m(\lambda_1)$,
then for any $\delta\in(0,1)$, with probability at least $1-\delta$,
\[
S(\hat\lambda_1^{\mathrm{ERM}})
\;\le\;
\min_{\lambda_1\in\Lambda,\,\lambda_1\ge\hat\lambda_1} S(\lambda_1)
\;+\;
2B\sqrt{\tfrac{\log(2|\Lambda|/\delta)}{2\,m_{\min}}}.
\]
\end{theorem}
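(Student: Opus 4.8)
The plan is to run the classical empirical-risk-minimization (uniform-deviation) argument over the finite, data-independent grid. Write $\Lambda^{+}=\{\lambda_1\in\Lambda:\lambda_1\ge\hat\lambda_1\}$, put $\varepsilon=B\sqrt{\log(2|\Lambda|/\delta)/(2m_{\min})}$, and let $\lambda_1^{\star}\in\arg\min_{\lambda_1\in\Lambda^{+}}S(\lambda_1)$. First I would reduce everything to the single uniform-deviation event $\mathcal{A}=\{\sup_{\lambda_1\in\Lambda^{+}}|\widehat S(\lambda_1)-S(\lambda_1)|\le\varepsilon\}$: on $\mathcal{A}$ the standard three-term chain
\[
S(\hat\lambda_1^{\mathrm{ERM}})\ \le\ \widehat S(\hat\lambda_1^{\mathrm{ERM}})+\varepsilon\ \le\ \widehat S(\lambda_1^{\star})+\varepsilon\ \le\ S(\lambda_1^{\star})+2\varepsilon
\]
gives the claim, the middle inequality being exactly the definition of $\hat\lambda_1^{\mathrm{ERM}}$ as the empirical minimizer over $\Lambda^{+}$ and the outer two using $\hat\lambda_1^{\mathrm{ERM}},\lambda_1^{\star}\in\Lambda^{+}$. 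So the whole theorem comes down to proving $\mathbb{P}(\mathcal{A})\ge 1-\delta$.

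For that I would condition on the selection information $\mathcal{G}=\sigma(g(X_1),\dots,g(X_n))$, which freezes $\hat\lambda_1$, the (a priori data-dependent) index set $\Lambda^{+}$, every selected calibration subset, and hence each $m(\lambda_1)$ and $m_{\min}$, turning all of them into constants and making the selected pairs within a subset conditionally independent. Fixing $\lambda_1\in\Lambda^{+}$ and treating the second-stage threshold $\hat\lambda_2(\lambda_1)$ as fixed for the moment, the prediction-set sizes entering $\widehat S(\lambda_1)$ lie in $[0,B]$, so Hoeffding's inequality gives $\mathbb{P}(|\widehat S(\lambda_1)-S(\lambda_1)|>\varepsilon\mid\mathcal{G})\le 2\exp(-2m(\lambda_1)\varepsilon^{2}/B^{2})$, which is at most $\delta/|\Lambda|$ once $m(\lambda_1)\ge m_{\min}$ is used in the exponent. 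A union bound over the at most $|\Lambda|$ members of $\Lambda^{+}$ yields $\mathbb{P}(\mathcal{A}^{c}\mid\mathcal{G})\le\delta$, and taking expectation over $\mathcal{G}$ removes the conditioning; here $m_{\min}$ appears precisely as the grid-uniform lower bound on the effective post-selection sample size, which is what lets one $\varepsilon$ serve every $\lambda_1$.

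The step that needs genuine care --- and the main obstacle --- is the pointwise concentration, because the CRC threshold $\hat\lambda_2(\lambda_1)$ of Equation~(\ref{eq_11}) is itself a function of the selected calibration pairs, so the summands $|C_{\hat\lambda_2(\lambda_1)}(X_i)|$ are coupled through $\hat\lambda_2$ and are not literally independent even given $\mathcal{G}$. I would handle this by further conditioning on the calibration information that fixes $\hat\lambda_2(\lambda_1)$ --- most cleanly through a split-sample variant in which $\hat\lambda_2$ is calibrated on a held-out part of the selected subset --- so that on the remaining i.i.d.\ points $\hat\lambda_2$ acts as a fixed threshold and the Hoeffding step applies verbatim, with $m(\lambda_1)$ replaced by the size of that part (still proportional to $m_{\min}$, the ratio absorbed into constants). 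The naive alternative of invoking bounded differences over the selected pairs without this conditioning is delicate: replacing one pair shifts the CRC constraint sum by at most $1$ yet can move $\hat\lambda_2$ by an uncontrolled amount when the loss is flat, so the induced bounded-difference constant need not be the $O(B/m)$ one wants. With the conditioning (or split-sample) route in place, the rest --- the union bound and the ERM chain above --- is entirely routine.
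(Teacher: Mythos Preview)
Your argument is exactly the paper's route: pointwise Hoeffding on each $\widehat S(\lambda_1)$, a union bound over the finite grid, and the standard three-term ERM chain $S(\hat\lambda_1^{\mathrm{ERM}})\le\widehat S(\hat\lambda_1^{\mathrm{ERM}})+\varepsilon\le\widehat S(\lambda_1^{\star})+\varepsilon\le S(\lambda_1^{\star})+2\varepsilon$. The paper takes the union bound over the full deterministic grid $\Lambda$ (rather than the data-dependent $\Lambda^{+}$), which sidesteps your conditioning on $\mathcal{G}$, but otherwise the skeleton is identical.

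Where you differ is in rigor, not strategy: the paper simply asserts that, for fixed $\lambda_1$, the summands $|C_{\hat\lambda_2(\lambda_1)}(X_i)|$ over $Z_{\lambda_1}$ are ``$m(\lambda_1)$ independent bounded random variables with expectation $S(\lambda_1)$'' and invokes Hoeffding directly, never mentioning the coupling through the data-driven $\hat\lambda_2(\lambda_1)$ that you correctly flag as the delicate step. Your split-sample remedy (calibrate $\hat\lambda_2$ on one part, average set sizes on the other) is a genuine addition beyond what the paper supplies, and is the clean way to make the independence claim literally true; the paper's version leaves this informal.
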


\begin{proof}
For a fixed $\lambda_1\in\Lambda$, under the i.i.d.\ assumption, the selected calibration subset
$Z_{\lambda_1}$ consists of $m(\lambda_1)$ i.i.d.\ samples drawn from the conditional distribution
$(X,Y)\mid g(X)\ge 1-\lambda_1$.
Hence $\widehat S(\lambda_1)$ is the empirical mean of $m(\lambda_1)$ independent
bounded random variables with expectation $S(\lambda_1)$.

By Hoeffding’s inequality,
\[
\Pr\!\left(\big|\widehat S(\lambda_1)-S(\lambda_1)\big|>\varepsilon\right)
\le 2\exp\!\Big(-\tfrac{2m(\lambda_1)\varepsilon^2}{B^2}\Big).
\]
Because the grid $\Lambda$ is deterministic, applying a union bound over $\Lambda$ and using
$m(\lambda_1)\ge m_{\min}$ yields
\[
\sup_{\lambda_1\in\Lambda}\big|\widehat S(\lambda_1)-S(\lambda_1)\big|
\le B\sqrt{\tfrac{\log(2|\Lambda|/\delta)}{2m_{\min}}}
\]
with probability at least $1-\delta$.
On this event, the empirical minimizer satisfies
\[
S(\hat\lambda_1^{\mathrm{ERM}})
\le \widehat S(\hat\lambda_1^{\mathrm{ERM}})+\varepsilon
\le \widehat S(\lambda_1^\star)+\varepsilon
\le S(\lambda_1^\star)+2\varepsilon,
\]
where
\[
\lambda_1^\star
=
\arg\min_{\lambda_1\in\Lambda,\ \lambda_1\ge \hat\lambda_1}
S(\lambda_1).
\]
Substituting $\varepsilon$ gives the claim.
\end{proof}


\begin{remark}
In practice, when the selection score $g(x)$ and the prediction score $f(x)$ are derived from the same underlying logits (as is common in deep classifiers), the two stages are strongly coupled: smaller $g(x)$ values also tend to have lower classification confidence under $f(x)$. Consequently, as $\lambda_1$ increases, the selected subset gradually includes more low-confidence and hence more difficult instances, often resulting in a monotone increase of the mean prediction set size.

However, this monotonic relationship is not guaranteed in general. When $g(x)$ and $f(x)$ are decoupled---for instance, if $g(x)$ measures some auxiliary notion of ``informativeness'' unrelated to the posterior confidence used in $f(x)$---the subsets selected by different $\lambda_1$ values may not correspond to strictly increasing difficulty levels. In such cases, the mapping between $\lambda_1$ and the mean set size can become non-monotone, justifying the need to perform a grid search over $\lambda_1$ rather than assuming monotonicity.
\end{remark}

\subsection{Calibration-only Variant}
In the transductive construction, the first-stage threshold depends symmetrically on both the calibration data and the test feature. This preserves exchangeability but requires recomputing the threshold for each new test point. For deployment, it is desirable to determine the first-stage threshold once from calibration data and reuse it for all future test points. We therefore introduce an inductive variant, SCRC-I, with a high-probability PAC-style guarantee.


Specifically, for any candidate first-stage threshold $\lambda_1$, define the selection indicator
\[
S_{\lambda_1}(x)=\mathbf{1}\{g(x)\ge 1-\lambda_1\}.
\]
Its population selection probability is
\[
q(\lambda_1)=\mathbb{P}\big(S_{\lambda_1}(X)=1\big),
\]
and its empirical counterpart on the calibration sample is
\[
\widehat q_n(\lambda_1)=\frac{1}{n}\sum_{i=1}^n S_{\lambda_1}(X_i).
\]

We choose the calibration-only first-stage threshold by
\begin{equation}
\hat\lambda_1'
=
\inf\left\{
\lambda_1\in\Lambda_1:\widehat q_n(\lambda_1)\ge \xi
\right\}.
\label{eq:inductive-lambda1}
\end{equation}

To lower-bound the true selection probability induced by the data-dependent threshold $\hat\lambda_1'$, we apply the Dvoretzky--Kiefer--Wolfowitz inequality with confidence level $\delta/2$. Define
\begin{equation}
\varepsilon_n^{(q)}
=
\sqrt{\frac{1}{2n}\log\frac{4}{\delta}},
\qquad
\xi_{\mathrm{LCB}}
=
\max\big\{\widehat q_n(\hat\lambda_1')-\varepsilon_n^{(q)},\,0\big\}.
\label{eq:xi-lcb}
\end{equation}

Next, for any candidate pair $(\lambda_1,\lambda_2)$, define the augmented population quantity
\begin{equation}
N(\lambda_1,\lambda_2)
=
\mathbb{E}\!\left[
S_{\lambda_1}(X)\,\ell(C_{\lambda_2}(X),Y)
\right],
\label{eq:aug-pop}
\end{equation}
and its empirical version
\begin{equation}
\widehat N_n(\lambda_1,\lambda_2)
=
\frac{1}{n}\sum_{i=1}^n
S_{\lambda_1}(X_i)\,\ell(C_{\lambda_2}(X_i),Y_i).
\label{eq:aug-emp}
\end{equation}

Since the loss is bounded in $[0,1]$, each summand in \eqref{eq:aug-emp} is also bounded in $[0,1]$. Over a finite candidate grid $\Lambda_1\times\Lambda_2$, a uniform Hoeffding inequality with confidence level $\delta/2$ yields
\begin{equation}
\varepsilon_n^{(N)}
=
\sqrt{
\frac{1}{2n}
\log\frac{4|\Lambda_1||\Lambda_2|}{\delta}
}.
\label{eq:eps-N}
\end{equation}

We then choose the second-stage threshold by the PAC-feasibility rule
\begin{equation}
\hat\lambda_2
=
\inf\left\{
\lambda_2\in\Lambda_2:
\widehat N_n(\hat\lambda_1',\lambda_2)+\varepsilon_n^{(N)}
\le
\alpha\,\xi_{\mathrm{LCB}}
\right\}.
\label{eq:inductive-lambda2}
\end{equation}

The selective conditional risk admits the decomposition
\begin{equation}
R(\lambda_1,\lambda_2)
=
\mathbb{E}\!\left[
\ell(C_{\lambda_2}(X),Y)\mid g(X)\ge 1-\lambda_1
\right]
=
\frac{N(\lambda_1,\lambda_2)}{q(\lambda_1)},
\label{eq:selective-risk-ratio}
\end{equation}
whenever $q(\lambda_1)>0$. The following proposition shows that the above construction yields a valid inductive guarantee.

\begin{proposition}[{Calibration-only variant}]
\label{prop:inductive-pac}
Assume $(X_i,Y_i)_{i=1}^n$ are i.i.d., let $\Lambda_1,\Lambda_2\subset[0,1]$ be finite grids, and construct $\hat\lambda_1'$ and $\hat\lambda_2$ according to \eqref{eq:inductive-lambda1} and \eqref{eq:inductive-lambda2}. Then for any $\delta\in(0,1)$, with probability at least $1-\delta$ over the calibration sample,
\[
\mathbb{E}\!\left[
\ell(C_{\hat\lambda_2}(X),Y)
\mid g(X)\ge 1-\hat\lambda_1'
\right]
\le \alpha,
\]
provided $\xi_{\mathrm{LCB}}>0$.
\end{proposition}

\begin{proof}
Apply the DKW inequality with confidence level $\delta/2$. Then with probability at least $1-\delta/2$,
\[
\sup_{\lambda_1\in\Lambda_1}
\big|\widehat q_n(\lambda_1)-q(\lambda_1)\big|
\le
\varepsilon_n^{(q)}.
\]
In particular,
\[
q(\hat\lambda_1')
\ge
\widehat q_n(\hat\lambda_1')-\varepsilon_n^{(q)}
=
\xi_{\mathrm{LCB}}.
\]

Next, for each fixed pair $(\lambda_1,\lambda_2)\in\Lambda_1\times\Lambda_2$, the variables
$S_{\lambda_1}(X_i)\,\ell(C_{\lambda_2}(X_i),Y_i)$ are i.i.d.\ and bounded in $[0,1]$. Hence Hoeffding's inequality and a union bound imply that, with probability at least $1-\delta/2$,
\[
\sup_{(\lambda_1,\lambda_2)\in\Lambda_1\times\Lambda_2}
\left|
\widehat N_n(\lambda_1,\lambda_2)-N(\lambda_1,\lambda_2)
\right|
\le
\varepsilon_n^{(N)}.
\]

On the intersection of these two events, the PAC-feasibility condition in \eqref{eq:inductive-lambda2} gives
\[
N(\hat\lambda_1',\hat\lambda_2)
\le
\widehat N_n(\hat\lambda_1',\hat\lambda_2)+\varepsilon_n^{(N)}
\le
\alpha\,\xi_{\mathrm{LCB}}
\le
\alpha\,q(\hat\lambda_1').
\]
Dividing both sides by $q(\hat\lambda_1')$ and using \eqref{eq:selective-risk-ratio} yields
\[
\mathbb{E}\!\left[
\ell(C_{\hat\lambda_2}(X),Y)
\mid g(X)\ge 1-\hat\lambda_1'
\right]
\le \alpha.
\]
A final union bound shows that this holds with probability at least $1-\delta$.
\end{proof}

\begin{remark}
SCRC-I is an inductive variant with a reusable calibration rule: both $\hat\lambda_1'$ and $\hat\lambda_2$ are computed once from the calibration sample and then reused for future test points. Its analysis does not require exchangeability after selection; instead, it controls the selection probability and the selected numerator risk separately via uniform concentration arguments. Relative to the transductive SCRC-T procedure, SCRC-I trades some statistical efficiency (due to concentration slack) for computational efficiency, avoiding per-test recomputation and making it more practical for deployment.
\end{remark}

\section{Experiments}
\subsection{Experiment Setup}
We adopt the following setup for all experiments presented in this section.  

\paragraph{Model and score preparation.}
Since model optimization is not the focus of this work, we use pretrained or standard models without additional hyperparameter tuning.  
The raw model logits are converted into calibrated class probabilities $f(x)$ via a temperature-scaled softmax transformation.  
For the selection function $g(x)$, we evaluate four commonly used confidence or uncertainty scores, all derived from the model logits:
\begin{itemize}
    \item \text{Maximum Softmax Probability (MSP)} \cite{hendrycks17baseline}: the highest predicted class probability,  $g_{\text{MSP}}(x) = \max_k p(y=k \mid x).$
    \item \text{Margin} \cite{hendrycks17baseline}: the difference between the top two predicted probabilities,  $g_{\text{margin}}(x) = p_{(1)} - p_{(2)}.$
    \item \text{Entropy} \cite{GalGhahramani2016Dropout}: the entropy of the predictive distribution, representing overall uncertainty,  $g_{\text{entropy}}(x) = -\sum_k {p(y=k|x)\log {p(y=k|x)}}$.
    \item \text{Energy} \cite{weitangliu2020}: an energy-based confidence score derived directly from unnormalized logits,  $g_{\text{energy}}(x) = -T \log \sum_k{ {\exp{\frac{\text{logit}_k(x)}{T}}}}$.
\end{itemize}

\paragraph{Compared Methods}
We compare four approaches for risk control:
\begin{itemize}
    \item \text{SCRC-T}: the transductive variant of our proposed method, which preserves full exchangeability.
    \item \text{SCRC-I}: the calibration-only (inductive) variant of our proposed method, which provides PAC-style probabilistic guarantees.
    \item \text{CRC-ALL}: a two-stage baseline that applies traditional conformal risk control (CRC) after selecting all data in the first stage.
    \item \text{RAND}: a random selection baseline that samples data at the target coverage rate before applying CRC in the second stage.
\end{itemize}

\paragraph{Evaluation protocol.}
To assess the validity of risk and coverage control, we conduct two complementary analyses.  
First, we fix the target risk level $\alpha$ and vary the desired coverage $\xi$ to evaluate selective coverage control.  
Then, we fix $\xi$ and vary $\alpha$ to evaluate risk control performance.  
For each setting, we report both the achieved coverage and empirical risk, as well as the average prediction set size.  
We further compare results for the subset of accepted (selected) samples and the rejected (unselected) samples to characterize efficiency.  

Each experiment is repeated 100 times with random sampling, and we report the averaged results across runs.  

All codes, including for both data generation and risk control, are publicly available at  
{\texttt{https://github.com/git4review/conformal\_selective\_classification}}.

\subsection{CIFAR-10 Dataset}

The CIFAR-10 dataset\footnote{\url{https://www.cs.toronto.edu/~kriz/cifar.html}} consists of 60,000 color images of size $32\times 32$, evenly distributed across 10 mutually exclusive object classes.  
Following standard practice, we use the 50,000 training images and 10,000 test images, with approximately 5,000 and 1,000 images per class, respectively.  
We further split the 50,000 training images into training, validation, and calibration sets using a 7:1:2 ratio.  
A ResNet-18 classifier \cite{He_2016_CVPR} is trained to produce logits for all experiments.

\paragraph{Coverage control.}
Figure~\ref{fig:cifar10_coverage_control_for_different_xi} shows empirical coverage and prediction set sizes as functions of the target coverage $\xi$, with the risk level fixed at $\alpha = 0.1$ and using the margin score as the selection function.  
All methods except CRC\_ALL successfully control coverage at the desired level.  
As expected, CRC\_ALL produces a coverage of $1.0$ regardless of $\xi$, since the first-stage selection always accepts all samples.

For both SCRC-T and SCRC-I, prediction set sizes for the selected subset increase with $\xi$, reflecting the strong coupling between the selection score $g(x)$ and the predictive scores $f(x)$.  
Moreover, selected samples by these two methods consistently exhibit substantially smaller prediction sets than unselected samples, confirming that the selection mechanism can effectively reject uncertain examples for classification.  
The RAND and CRC\_ALL baselines, on the other hand, show prediction set sizes that remain unchanged across different values of $\xi$, due to their lack of the selection process.
 
\begin{figure}[ht]
\centering
\begin{subfigure}{1.05\linewidth}
  \centering
  \includegraphics[width=\linewidth]{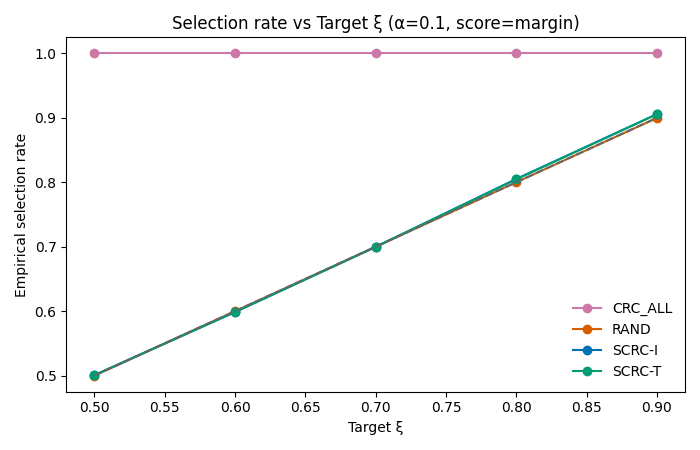}
\end{subfigure}
\begin{subfigure}{1.05\linewidth}
  \centering
  \includegraphics[width=\linewidth]{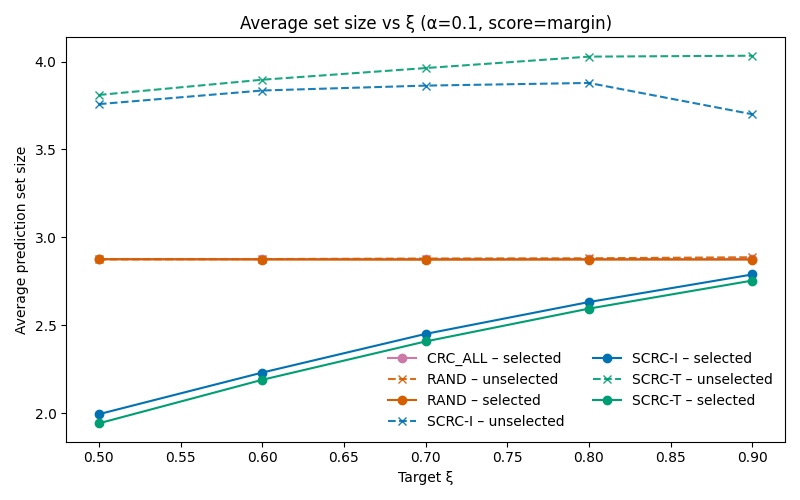}
\end{subfigure}
\caption{CIFAR-10: Coverage control at different values of $\xi$ with $\alpha=0.1$ (margin score).}
\label{fig:cifar10_coverage_control_for_different_xi}
\end{figure}

\paragraph{Risk control.}
Figure~\ref{fig:cifar10_risk_control_for_different_alpha} reports the empirical risk and prediction set sizes as functions of $\alpha$, with the target coverage fixed at $\xi = 0.7$.  
All methods achieve the desired selective risk control.  
Prediction set sizes decrease as $\alpha$ increases, consistent with the fact that looser risk constraints permit smaller prediction sets.  
As in the coverage experiments, selected samples exhibit much smaller prediction sets than rejected samples for both SCRC-T and SCRC-I,.

\begin{figure}[ht]
\centering
\begin{subfigure}{1.05\linewidth}
  \centering
  \includegraphics[width=\linewidth]{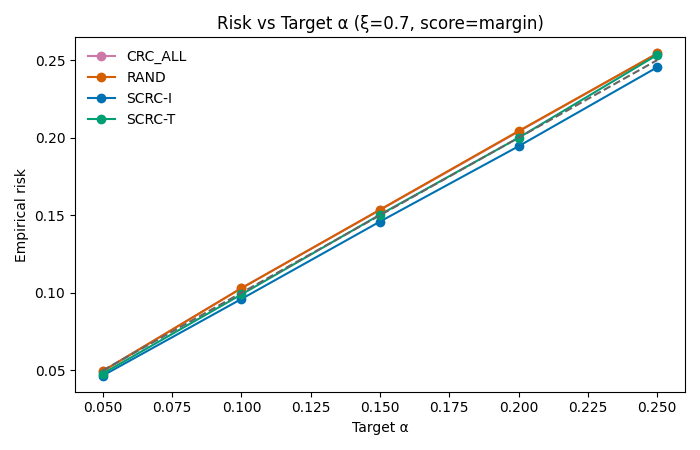}
\end{subfigure}
\begin{subfigure}{1.05\linewidth}
  \centering
  \includegraphics[width=\linewidth]{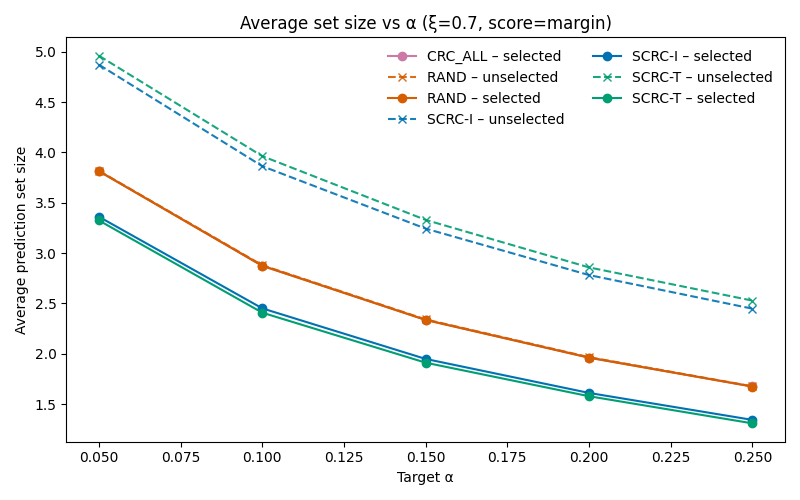}
\end{subfigure}
\caption{CIFAR-10: Risk control at different values of $\alpha$ with $\xi=0.7$ (margin score).}
\label{fig:cifar10_risk_control_for_different_alpha}
\end{figure}

\paragraph{Comparison of SCRC-T and SCRC-I.}
Across both sets of experiments, SCRC-T and SCRC-I deliver nearly identical empirical performance.  
However, SCRC-I is slightly more conservative in risk control, leading to marginally larger prediction sets.  
This behavior follows directly from its PAC-style correction using the lower confidence bound $\xi_{\mathrm{LCB}}$, which trades a small amount of efficiency for a reusable, inductive calibration procedure with high-probability guarantees.  
In practice, the difference between the two methods is minor and decreases as the calibration set grows.  
SCRC-T, on the other hand, yields a more efficient prediction sets when a per-test recomputation is feasible, though it incurs a much higher computational cost. This empirical comparison reflects their intended roles: SCRC-T serves as the exact exchangeability-preserving benchmark, while SCRC-I is the practically deployable variant.

\paragraph{Effect of score functions.}
To study the impact of different selection functions, we repeat the experiments using MSP, margin, entropy, and energy scores.  
As shown in Figure~\ref{fig:cifar10_score_functions}, all four choices achieve comparable coverage and risk control, but produce different prediction set sizes.  
Entropy and energy yield the smallest prediction sets, while margin produces the largest. This highlights an important distinction: the choice of $g(x)$ primarily affects \emph{efficiency}---how many examples are accepted and how small the resulting prediction sets are---rather than the validity of the selective guarantees themselves.

\begin{figure}[!htb]
  \centering
  \includegraphics[width=1.05\linewidth]{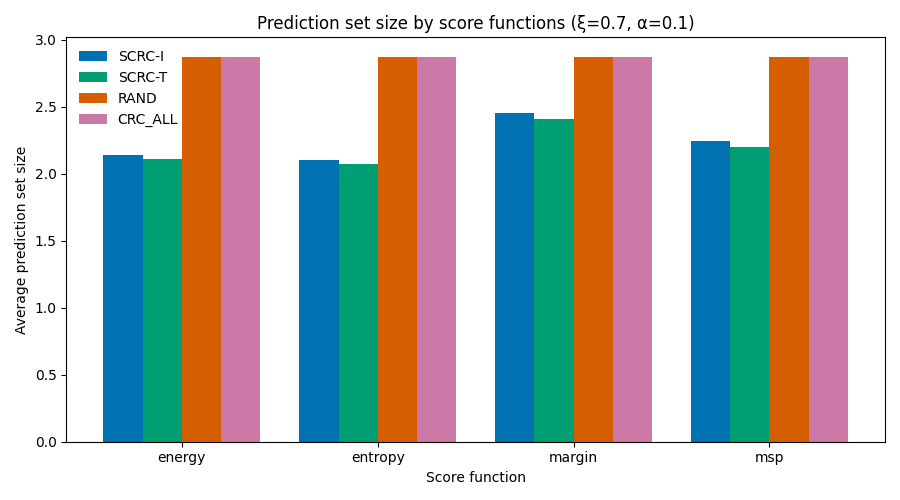}
  \caption{CIFAR-10: Comparison of different selection score functions.}
  \label{fig:cifar10_score_functions}
\end{figure}

\paragraph{Effect of $\delta$ for SCRC-I.}
Finally, we examine the impact of the confidence parameter $\delta$ on SCRC-I, which determines the DKW half-width $\epsilon_{n,\delta}$.  
Fixing $\xi = 0.7$ and $\alpha = 0.1$, we vary $\delta$ and report empirical results in Table~\ref{tab:cifar10_delta}.  
Tighter values of $\delta$ (i.e., stronger confidence guarantees) lead to slightly more conservative risk control and marginally larger prediction sets, both of which are expected behaviors.  
Overall, the effect is small but consistent.

\begin{table}[ht]
\centering
\caption{CIFAR-10: Impact of different values of $\delta$ for SCRC-I.}
\label{tab:cifar10_delta}
\setlength{\tabcolsep}{10pt}
\renewcommand{\arraystretch}{1.3}
\begin{tabular}{|l|c|c|c|}
\hline
\textbf{Metric} & \textbf{$\delta=0.01$} & \textbf{$\delta=0.05$} & \textbf{$\delta=0.10$} \\
\hline
Empirical risk        & 0.0954 & 0.0960 & 0.0964 \\
Prediction set size   & 2.460  & 2.451  & 2.446  \\
\hline
\end{tabular}
\end{table}

\subsection{Diabetic Retinopathy Detection Dataset}
The Diabetic Retinopathy Detection (DRD) dataset\footnote{\url{https://www.kaggle.com/competitions/diabetic-retinopathy-detection}} contains over 35,000 retinal fundus images captured under varying imaging conditions.  
Each image is manually graded by clinicians into one of five ordinal severity levels (0 to 4 where 4 is the most severe level), making the task clinically meaningful and well suited for uncertainty-aware evaluation.  
Compared to CIFAR-10, this dataset presents two notable differences: (i) it comes from a medical imaging domain where reliable uncertainty quantification is particularly critical, and (ii) the labels are \emph{ordinal}, requiring a different notion of prediction-set loss.  
We therefore adopt the weighted ordinal loss from \cite{yunpengxu2023ordinal} when constructing prediction sets.

Following the same protocol as in the CIFAR-10 experiments, we randomly split the images into training/validation, calibration, and test sets in a 4:1:1 ratio and train a ResNet-34 \cite{He_2016_CVPR} to obtain logits for the five severity levels.

\paragraph{Coverage and risk control.}
Figures~\ref{fig:drd_coverage_control_for_different_xi} and \ref{fig:drd_risk_control_for_different_alpha} show the selective coverage and risk as functions of $\xi$ and $\alpha$.  
The overall behavior closely mirrors that of CIFAR-10: both SCRC-T and SCRC-I achieve the desired selective coverage and risk across all settings, with their selected samples exhibiting substantially smaller prediction sets than rejected samples.  
CRC\_ALL again maintains a fixed coverage of $1.0$, and RAND remains insensitive to $\xi$ due to its uninformed selection step.

\begin{figure}[ht]
\centering
\begin{subfigure}{1.05\linewidth}
  \centering
  \includegraphics[width=\linewidth]{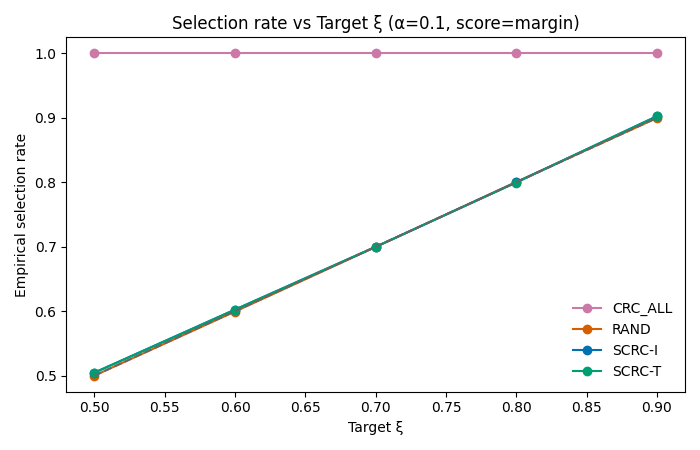}
\end{subfigure}
\begin{subfigure}{1.05\linewidth}
  \centering
  \includegraphics[width=\linewidth]{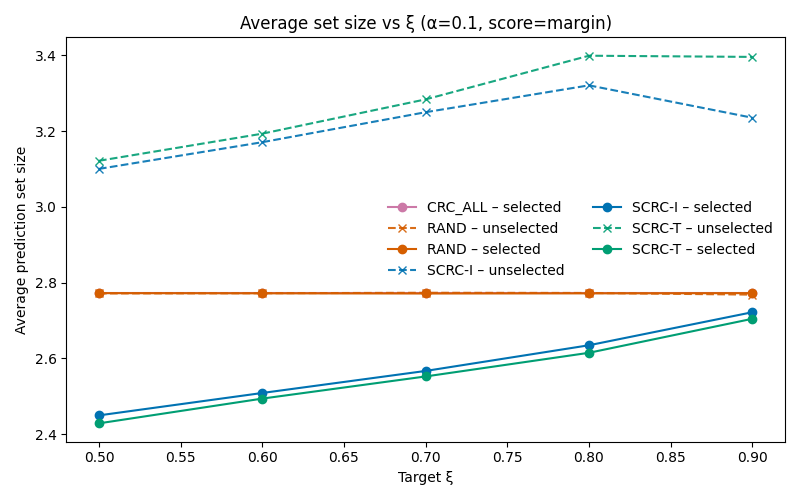}
\end{subfigure}
\caption{DR Detection: Coverage control at different values of $\xi$ with $\alpha=0.1$ (margin score).}
\label{fig:drd_coverage_control_for_different_xi}
\end{figure}

\begin{figure}[ht]
\centering
\begin{subfigure}{1.05\linewidth}
  \centering
  \includegraphics[width=\linewidth]{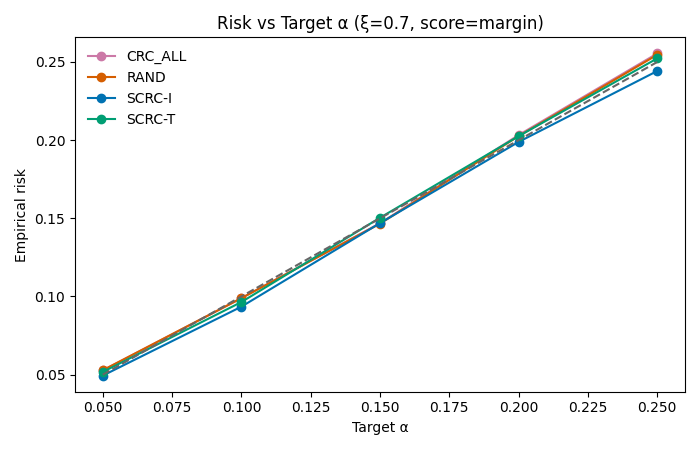}
\end{subfigure}
\begin{subfigure}{1.05\linewidth}
  \centering
  \includegraphics[width=\linewidth]{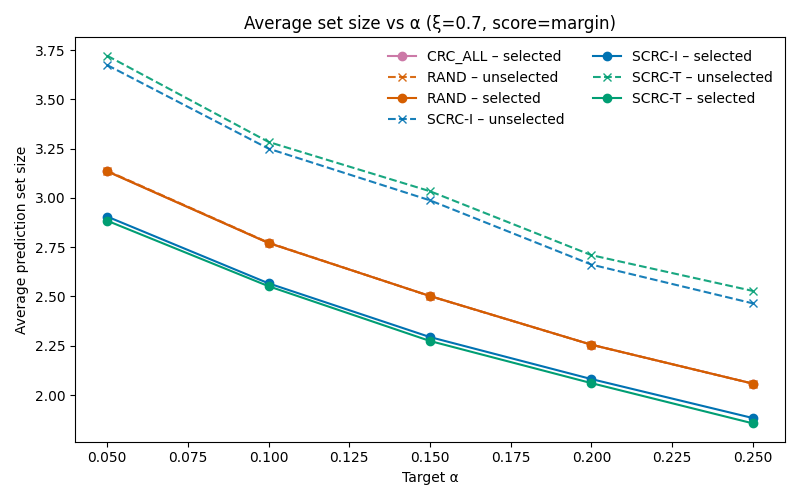}
\end{subfigure}
\caption{DR Detection: Risk control at different values of $\alpha$ with $\xi=0.7$ (margin score).}
\label{fig:drd_risk_control_for_different_alpha}
\end{figure}

\paragraph{Comparison of methods.}
As in the CIFAR-10 results, SCRC-T and SCRC-I perform almost identically, with SCRC-I being slightly more conservative due to its PAC-style correction.  
This conservativeness is more noticeable on smaller calibration sets, which are common in medical datasets, but the effect remains small in practice.  
Importantly, both methods maintain valid uncertainty control despite the ordinal structure of the labels. In this setting, abstention is especially useful because difficult cases can be deferred, while accepted cases still receive informative calibrated set-valued predictions.

\paragraph{Effect of score functions and $\delta$.}
Figure~\ref{fig:drd_score_functions} shows prediction set sizes for various score functions; the relative ranking is consistent with CIFAR-10: entropy and energy yield the smallest sets, and margin is more conservative.  
Table~\ref{tab:cifar10_delta} summarizes the effect of $\delta$ in SCRC-I.  
Smaller $\delta$ values (tighter confidence levels) lead to slightly larger prediction sets and marginal increases in conservativeness, as expected from the tighter DKW constraint.

\begin{figure}[!htb]
  \centering
  \includegraphics[width=1.05\linewidth]{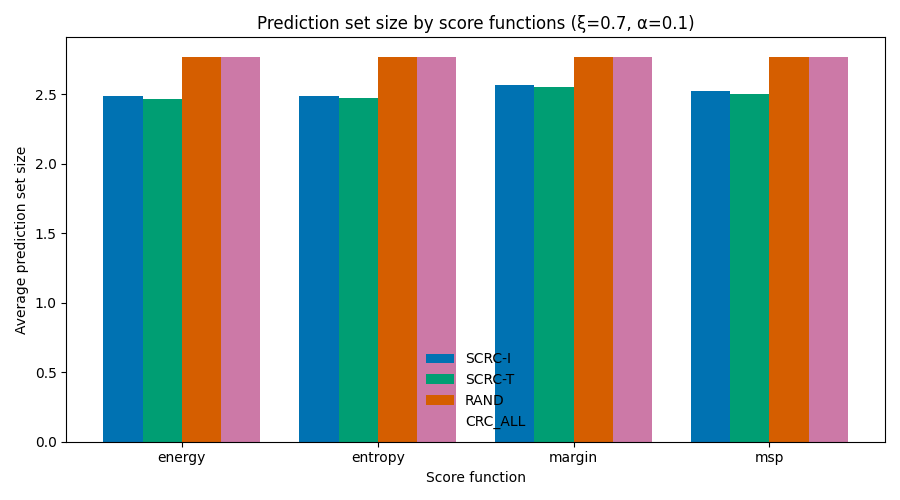}
  \caption{DR Detection: Comparison of different selection score functions.}
  \label{fig:drd_score_functions}
\end{figure}

\begin{table}[ht]
\centering
\caption{DR Detection: Impact of different values of $\delta$ for SCRC-I.}
\label{tab:cifar10_delta}
\setlength{\tabcolsep}{10pt}
\renewcommand{\arraystretch}{1.3}
\begin{tabular}{|l|c|c|c|}
\hline
\textbf{Metric} & \textbf{$\delta=0.01$} & \textbf{$\delta=0.05$} & \textbf{$\delta=0.10$} \\
\hline
Empirical risk        & 0.0931 & 0.0935 & 0.0937 \\
Prediction set size   & 2.569  & 2.567  & 2.566  \\
\hline
\end{tabular}
\end{table}

\section{Conclusions}
In this paper, we introduced \textit{Selective Conformal Risk Control}, a unified framework that integrates conformal prediction with selective classification to provide reliable, distribution-free uncertainty quantification while improving efficiency through selective abstention. The framework targets the selective setting: it produces calibrated prediction sets on accepted inputs and explicitly abstains on rejected ones, rather than attempting to force informative prediction sets on every sample. Our design enables simultaneous guarantees on selective coverage and conditional risk, and addresses the practical limitation of standard conformal methods that often yield excessively large prediction sets. 

We developed two algorithmic variants within this framework: SCRC-T and SCRC-I. Empirical results demonstrate that both methods successfully achieve the target coverage and risk levels.  
Their performance is nearly identical across all tested configurations, with the SCRC-I variant exhibiting slightly more conservative risk control, an expected consequence of its PAC correction. Nevertheless, SCRC-I is more practical for real world deployment, as it avoids parameter recomputation while maintaining reliable uncertainty calibration. In contrast, SCRC-T serves as the exact exchangeability-preserving construction and provides a useful benchmark when per-test recomputation is acceptable.

A limitation of the current framework is that it does not provide a second-stage prediction-set guarantee for rejected samples; these cases are intentionally deferred and should be handled by a downstream fallback mechanism such as human review or a more specialized model. Future work may explore integrating such fallback mechanisms into a unified framework, as well as developing adaptive or learned selection functions. Additional directions include extending the approach to regression and ranking settings, and applying it to large-scale, high-stakes domains where both reliable uncertainty control and computational efficiency are critical.

\printbibliography

\end{document}